\title{On the Ontological Modeling of Trees} 
\titlerunning{On the Ontologial Modeling of Trees}
\author{David Carral\inst{1} \and Pascal Hitzler\inst{2} \and Hilmar Lapp\inst{3} \and Sebastian Rudolph\inst{1}}
\institute{TU Dresden, Germany \and Data Semantics (DaSe) Laboratory,
  Wright State University, OH, USA \and Center for Genomic and
  Computational Biology, Duke University, Durham, NC, USA}
\authorrunning{David Carral, Pascal Hitzler, Hilmar Lapp, Sebastian Rudolph}
\begin{document}

\maketitle

\begin{abstract}
Trees -- i.e., the type of data structure known under this name -- are central to many aspects of knowledge organization. We investigate some central design choices concerning the ontological modeling of such trees. In particular, we consider the limits of what is expressible in the Web Ontology Language, and provide a reusable ontology design pattern for trees.  
\end{abstract}


\section{Introduction}\label{sec:intro}


Trees are fundamental data structures for knowledge organization. They make their appearance in the form of taxonomies, meronomies, decision trees, branching processes, etc. As such they are fundamental for ontological knowledge representation. 

At the same time, however, it is not possible to fully characterize trees in the Web Ontology Language (OWL) \cite{owl2-primer,FOST} (see Section \ref{sec:tree-limits}). It is thus an important research question how to represent trees in ontology modeling, and to understand the pros and cons of different ways to do it. 

We need to realize, of course, that trees in ontology modeling often serve a different purpose than in programming. Operations on trees important in programming include, for example, adding or deleting items or pruning of whole sections; i.e., some of the important operations do actually change the tree. For ontology modeling purposes, in contrast, it is more appropriate to think of a tree as static and as something which is being queried. Typical queries would be to identify roots or leaves, common ancestors, or descendants. 

However, despite the importance of trees for knowledge organization, there is currently no corresponding ontology design pattern available on ontologydesignpatterns.org. In this paper, we will provide such a pattern, and will also discuss different design choices as well as their respective advantages and disadvantages. 

The rest of the paper is structured as follows. In Section \ref{sec:trees-of-life} we present a particularly interesting use case which has informed our work, namely the use of ontology modeling for evolutionary or phylogenetic trees. In Section \ref{sec:tree-limits} we  discuss the fundamental shortcomings of the Web Ontology Language (OWL) regarding the modeling of trees.\footnote{The pattern is available from \url{http://ontologydesignpatterns.org/wiki/Submissions:Tree_Pattern}} In Section \ref{sec:arb-trees} we present a basic ontology design pattern for the modeling of trees. In Section \ref{sec:bin-trees} we discuss the special case of $n$-bounded trees (e.g., with $n = 2$ for binary trees). In Section \ref{sec:conclusion} we conclude. 

\section{Phylogenetic Trees}\label{sec:trees-of-life}

One of the central tenets following from the theory of organismal
evolution is that all life is related through descent with
modification \cite{Darwin1859-ka}. That is, populations of a
biological species can over time diverge enough, due to natural
selection, adaptation, genetic drift, and other forces acting
differentially on different populations, that they form new species,
some of which persist and go on themselves to split, giving rise to
new species, and so forth. Speciation through diversification can
sometimes be driven by new ecologic opportunities, for example when
new habitats are being colonized, a process often referred to as
adaptive radiation \cite{Simpson1949-pv,Simpson1953-qg}. One of the
most prominent research objectives in evolutionary science is to
reconstruct, using genetic and organismal trait data, the evolutionary
history of different organisms, species, or life forms; i.e., to
reconstruct the lines of shared descent by which organisms are
connected \cite{Felsenstein2003-ji,Swofford1996-ya}. Such a
reconstruction is represented in the form of a phylogenetic tree, in
which the leaves are often called operational taxonomic units (OTUs)
and represent the sampled entities, and internal nodes represent
ancestral entities, such as ancestral populations from which
descendent ones diverged. Phylogenetic reconstruction results in
unrooted trees; the root is normally not known (and cannot normally be
sampled), but reasonably accurate mechanisms for inducing a root exist
\cite{Huelsenbeck2002-hm,Maddison1984-qa} (for example, by including
in the reconstruction analysis a group of species -- a so-called
``outgroup'' -- that are already known to fall outside of the ingroup
for which evolutionary patterns are being studied).

A phylogenetic tree represents important evolutionary hypotheses about\linebreak[4]
shared history. For example, two OTUs A and B are more closely related
to each other than to OTU C if A and B share a more recent common
ancestor than they do with C. The subtree descending from a node forms
a clade, clades which share a parent are called sister clades. One of
the major objects of comparative phylogenetics is to identify the
properties and processes (organismal traits, geographic range, tempo and mode of
evolution, etc) by which one clade differs from others, in particular
its sisters, and how these properties change along lines of descent in
the tree \cite{Felsenstein1985-au,OMeara2011-br}. This gives rise to a
number of important queries when mapping data onto phylogenetic trees
for (or as a result of) analysis. Particularly ubiquitous operations
on trees include the following: (1) finding the most recent common
ancestor of a given number of nodes (usually leaf nodes); (2)
enumerating the leaf nodes, or all nodes descending from a given
(internal) node; (3) enumerating the sequence of ancestors of a node
to the root; and (4) identifying the last ancestor of a node A from
which another node B is not also descended. We will come back to these and other operations as part of the competency questions for our modeling in Section \ref{sec:arb-trees}.

Operations (1) and (4) correspond to two principle ways in which the
semantics of clade concepts can be defined on a tree
\cite{De_Queiroz1990-bv}, whether using a concrete instantiation of a
tree, or a hypothetical one. In the field of phylogenetic taxonomy
\cite{De_Queiroz1992-oq}, a clade concept defined by the most recent
common ancestor of a set of (usually leaf) nodes includes the common
ancestor and is referred to as a node-based definition. In contrast, a
branch-based definition circumscribes the clade as the last ancestor
of a (usually leaf) node that excludes (i.e., does not have as a
descendant) another node (also usually a leaf node). The semantics of
a clade concept defined in this way is such that the branch subtending
from the ancestor node to its parent is included (hence the name
“branch-based”). To understand this, remember that a phylogenetic tree
is a model of evolutionary lines of descent reconstructed from sampled
data. In reality, there may be lines of descent which were not
observed (sampled), for example because all organisms from those lines
are now extinct, but which, had they been observed, would originate
from the subtending branch and which would therefore still be included
in the clade because they would branch off after the lineage to be
excluded.

It is worth noting that spurred in part by the exponentially
increasing amount of data available for phylogenetic reconstruction,
very large trees encompassing up to tens of thousands of taxa have
recently become available \cite{Goloboff2009-oa,Driskell2004-wq,Dunn2008-tr,Smith2009-yk,Jarvis2014-lq}, culminating in the initial publication of
the synthesized Open Tree of Life with about 2 million tips \cite{Hinchliff2015-nd}. Such
encompassing trees open up unprecedented opportunities for comparative
phylogenetic research. However, this also means that our knowledge
about the evolution of life is changing at increasing pace and
breadth, which makes it necessary to efficiently map clade definitions
from one tree to another, or from one revision of the Open Tree of
Life to a future one. A recent initiative, termed ``phyloreferencing''
(http://phyloref.org) aims to accomplish this by using machine
reasoning over ontological representations of the semantics of both
clade definitions and phylogenetic trees \cite{Cellinese2015-zv,Michael_Keesey2007-lb,Sereno2005-wv}. In the rest of this paper, we abstract from the specific use case and look at the task of ontological modeling of trees in general.

\section{Fundamental Limitations Regarding Tree Modeling}\label{sec:tree-limits}

In order to investigate to what degree tree-based properties can be expressed using common KR formalisms, we first need to formally define what structures we denote by the notion ``tree''.

\begin{definition}\label{def:tree} A \emph{rooted directed branching tree} (short: \emph{tree}) is defined as a directed graph $T=(V,E)$ where $V$ is a set called \emph{vertices} or \emph{nodes} and $E\subseteq V \times V$ is the set of edges, satisfying the following properties:
\begin{enumerate}
\item There is exactly one node $r \in V$ called \emph{root}, which has no incoming edges, i.e., $E \cap (V \times \{r\}) = \emptyset$.
\item Every node $v \in V \setminus \{r\}$ that is not the root has exactly one incoming edge, i.e., there exists exactly one $v' \in V$ such that $(v',v)\in E$. We then call $v'$ the parent of $v$ and $v$ the child of $v'$.
\item Every node $v \in V$ can be reached from the root traversing edges, i.e., there is a number $n\geq 0$ and a sequence $(v)_{i\in \{0,\ldots,n\}}$ such that $r=v_0$, $v=v_n$, and for all $i\in \{0,\ldots,n-1\}$ we have $(v_i,v_{i+1})\in E$.
\end{enumerate}
A node without children will be called \emph{leaf {node}}. A \emph{binary tree} is a tree where every node that is not a leaf has exactly two children. An \emph{$n$-ary tree} is a tree where every node that is not a leaf has exactly $n$ children. An \emph{$n$-bounded tree} is a tree where every node has at most $n$ children.  A tree is \emph{finite} if $V$ is finite.
\end{definition}

When modeling trees using some logic-based KR language, we would like to achieve that we can create a knowledge base which has exactly all (finite) trees as its models (possibly using additional auxiliary vocabulary), in other words, we would like to characterize or axiomatize the class of all (finite) trees.

Unfortunately, it is not too hard to show that this is not possible by any KR formalism that is expressible in first order predicate logic (FOL). A very helpful tool for showing this is the well-known compactness theorem of first-order logic\cite{compactness}.

\begin{theorem}[Compactness of FOL]
A set $\Phi$ of FOL sentences is satisfiable if and only if every finite subset of $\Phi$ is.	
\end{theorem}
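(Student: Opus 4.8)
The plan is to separate the two directions and concentrate all of the difficulty in the converse. The forward implication is immediate: if $\Phi$ is satisfiable then some structure $\mathcal{M}$ models every sentence of $\Phi$, and a fortiori $\mathcal{M}$ models every sentence of any finite subset $\Phi_0 \subseteq \Phi$; hence each finite subset is satisfiable. No construction is needed here.

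For the converse I would argue by contraposition, routing the argument through a sound and complete proof calculus for FOL (a Hilbert-style system or natural deduction). The first step is to invoke the \emph{completeness theorem}, which guarantees that a set of sentences is satisfiable if and only if it is syntactically consistent, i.e.\ does not derive a contradiction. Assuming $\Phi$ is unsatisfiable, completeness makes $\Phi$ inconsistent, so there is a formal derivation of a contradiction from premises in $\Phi$. The decisive observation is the \emph{finiteness of proofs}: any derivation is a finite object and therefore uses only finitely many premises $\varphi_1,\dots,\varphi_k \in \Phi$. The finite set $\Phi_0 = \{\varphi_1,\dots,\varphi_k\}$ is then already inconsistent, hence unsatisfiable by soundness. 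Contraposing yields exactly the claim: if every finite subset of $\Phi$ is satisfiable, then so is $\Phi$.

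The main obstacle is the completeness theorem itself, and specifically its hard direction (consistent implies satisfiable), since that is the only point at which a genuine model of the possibly infinite set $\Phi$ must be exhibited. I would discharge it by a Henkin-style construction: conservatively extend the signature with fresh witnessing constants, extend $\Phi$ to a maximally consistent theory with the Henkin witnessing property via a Lindenbaum enumeration, and then read off a canonical term model on the (equality-quotiented) closed terms, establishing the truth lemma by induction on formula structure. As a self-contained alternative that avoids proof theory altogether, I could give a semantic proof via ultraproducts: index the finite subsets of $\Phi$ by a set $I$, choose a model $\mathcal{M}_F$ for each finite $F$, observe that the sets $\{F \in I : \varphi \in F\}$ have the finite intersection property and hence extend to an ultrafilter $U$ on $I$, and apply the theorem of \L{}o\'{s} to the ultraproduct $\prod_{F} \mathcal{M}_F / U$ to conclude that it satisfies every $\varphi \in \Phi$. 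In either route the essential difficulty is identical: fabricating one structure that simultaneously realizes infinitely many constraints out of purely local (finite) satisfiability, and this is the step I expect to dominate the proof.
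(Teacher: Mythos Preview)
Your proposal is a correct and well-organized sketch of the standard proofs of compactness (the completeness-based argument with a Henkin construction, and the ultraproduct alternative via \L{}o\'{s}'s theorem). However, there is nothing to compare it against: the paper does not prove this theorem. It is merely stated as a well-known result with a citation and then invoked as a tool in the proof of the subsequent Proposition. So your write-up goes well beyond what the paper itself supplies, which is simply the statement followed by an application.
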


We now use this theorem to show our negative result.

\begin{proposition}
Let $\psi$ be a FOL sentence (using the binary predicate ``$edge$'') such that every finite tree $T=(V,E)$ corresponds to some model $\mathcal{I}$ of $\psi$, i.e., $(V,E) \cong (\Delta^\mathcal{I},edge^\mathcal{I})$. Then, $\psi$ also has a model which does not correspond to any (finite or infinite) tree.
\end{proposition}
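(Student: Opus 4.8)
The plan is to exploit the one property of trees that first-order logic cannot detect, namely that every node is reachable from the root in \emph{finitely} many steps (item 3 of Definition~\ref{def:tree}). Compactness will let us manufacture a model containing an infinite chain of ancestors, and such a chain is incompatible with finite reachability. Concretely, I would introduce a countable supply of fresh constant symbols $c_0, c_1, c_2, \dots$ and form the theory
\[
\Phi \;=\; \{\psi\} \,\cup\, \{\, edge(c_{i+1}, c_i) \mid i \in \mathbb{N} \,\} \,\cup\, \{\, c_i \neq c_j \mid i \neq j \,\},
\]
whose intended effect is to pin down an infinite directed path $\dots \to c_2 \to c_1 \to c_0$ of pairwise distinct elements, i.e., a node $c_0$ possessing infinitely many distinct ancestors.

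The first real step is to verify that every finite subset $\Phi_0 \subseteq \Phi$ is satisfiable, so that compactness applies. Any such $\Phi_0$ mentions only finitely many of the constants, say among $c_0, \dots, c_N$. I would then take the finite tree that is simply a directed path $u_0 \to u_1 \to \dots \to u_N$ on $N+1$ distinct nodes; this is a finite tree, so by the hypothesis on $\psi$ it arises as $(\Delta^{\mathcal{I}}, edge^{\mathcal{I}})$ for some model $\mathcal{I} \models \psi$. Interpreting $c_i$ as $u_{N-i}$ makes all the edge-atoms $edge(c_{i+1}, c_i)$ true and all relevant inequalities hold, since the $u_j$ are distinct; hence $\mathcal{I}$ (with this reading of the constants) satisfies $\Phi_0$.

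By the compactness theorem, $\Phi$ itself has a model $\mathcal{M}$. In $\mathcal{M}$ the elements $c_0^{\mathcal{M}}, c_1^{\mathcal{M}}, \dots$ are pairwise distinct and satisfy $edge(c_{i+1}^{\mathcal{M}}, c_i^{\mathcal{M}})$ for every $i$, so $(\Delta^{\mathcal{M}}, edge^{\mathcal{M}})$ contains an infinite ancestor chain, and it remains to argue that this edge-reduct is no tree at all. The hard part, and the crux of the whole argument, is this last verification: I would assume for contradiction that $(\Delta^{\mathcal{M}}, edge^{\mathcal{M}})$ is a (finite or infinite) tree and chase the parents of $c_0^{\mathcal{M}}$. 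By uniqueness of the incoming edge (item 2) the unique parent of $c_i^{\mathcal{M}}$ is forced to be $c_{i+1}^{\mathcal{M}}$, and since each $c_i^{\mathcal{M}}$ has an incoming edge, none of them can be the root (item 1). Consequently any directed path from the root down to $c_0^{\mathcal{M}}$ would have to traverse $c_1^{\mathcal{M}}, c_2^{\mathcal{M}}, \dots$ in reverse and could never terminate at an incoming-edge-free root in finitely many steps, contradicting reachability (item 3). Thus $\mathcal{M}$ is a model of $\psi$ whose edge relation is not a tree, which is exactly the desired conclusion. I expect the subtlety to lie less in the compactness bookkeeping than in making this final contradiction airtight, i.e., in carefully combining the uniqueness-of-parent and finite-reachability conditions to rule out the infinite ancestor chain.
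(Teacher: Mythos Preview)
Your proof is correct and follows essentially the same compactness strategy as the paper: force the existence of a node with arbitrarily long incoming edge-paths, then observe that such a node cannot be reached from the root in finitely many steps. The only difference is cosmetic---the paper uses a single fresh constant $a$ together with existential sentences $\varphi_k$ asserting an incoming path of length $k$ to $a$, whereas you name each ancestor explicitly with its own constant $c_i$ and add distinctness axioms; both variants yield the same contradiction with items 1--3 of Definition~\ref{def:tree}.
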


\begin{proof}
Consider the following sequence $(\varphi_i)_{i\in \mathbb{N}}$ of FOL sentences (where $a$ is a fresh constant):
\begin{itemize}
\item[] $\varphi_1 := \exists x_1. edge(x_1,a)$
\item[] $\varphi_2 := \exists x_1 \exists x_2. edge(x_2,x_1) \wedge edge(x_1,a)$
\item[] $\varphi_3 := \exists x_1\exists x_2\exists x_3. edge(x_3,x_2) \wedge edge(x_2,x_1) \wedge edge(x_1,a)$
\item[] $\vdots$
\end{itemize} 

In words, $\varphi_k$ expresses that the node $a$ has an incoming edge-path of length $k$.
Now let $\Phi := \{\psi\} \cup \{\varphi_k \mid k \in \mathbb{N}\}$. Obviously, every finite subset of $\Phi$ is satisfiable (intuitively, just pick an arbitrary large finite tree and then pick $a$ such that it is ``deep enough'' in the tree). Then, by compactness of FOL, $\Phi$ itself must be satisfiable. However, in a model $\mathcal{I}$ of $\Phi$ the element $a^\mathcal{I}$ cannot be reachable from the root, since then it would have an incoming edge-path of maximal length which cannot be the case by construction of $\Phi$. Hence $\mathcal{I}$ cannot correspond to a tree. By construction, $\mathcal{I}$ is also a model of $\psi$.\qed
\end{proof}

This result shows, that trees (finite or infinite) are not fully axiomatizable in FOL and any attempt to do so will only be approximate (although practically useful). 

On the other hand, trees are axiomatizable when we extend FOL (or just DLs for that matter) by a transitive closure operator for binary predicates. Assume that, for every binary predicate (or in DL terms: role) $p$, we allow for a binary predicate/role name $p^+$ and define its semantics such that $(p^+)^\mathcal{I}$ is the transitive closure of $p^\mathcal{I}$. Then the conditions of Definition~\ref{def:tree} can be expressed using the following axioms:
\begin{align}
\{\text{root}\} &\sqsubseteq  \neg \exists \text{edge}^-.\top\\
\neg \{\text{root}\} &\sqsubseteq  {=}1\, \text{edge}^-.\top \\
\neg \{\text{root}\} &\sqsubseteq \exists (\text{edge}^-)^+.\{\text{root}\}
\end{align}
To axiomatize the class of binary trees, the following axiom can be added: 
\begin{align}
\top &\sqsubseteq  \neg \exists \text{edge}.\top \sqcup {{=}2} \,\text{edge}.\top
\end{align}
In order to impose finiteness, one can axiomatize (as an auxiliary additional structure) a finite linear order with a starting element and an ending element and the successor role:

\noindent
\begin{minipage}{0.49\textwidth}
\begin{align}
\{\text{start}\} &\sqsubseteq  \neg \exists \text{succ}^-.\top\\
\neg \{\text{start}\} &\sqsubseteq  {=}1\, \text{succ}^-.\top \\
\neg \{\text{start}\} &\sqsubseteq \exists (\text{succ}^-)^+.\{\text{start}\}
\end{align}
\end{minipage}
\hfill
\begin{minipage}{0.49\textwidth}
\begin{align}
\{\text{end}\} &\sqsubseteq  \neg \exists \text{succ}.\top\\
\neg \{\text{end}\} &\sqsubseteq  {=}1\, \text{succ}.\top \\
\neg \{\text{end}\} &\sqsubseteq \exists (\text{succ})^+.\{\text{end}\}
\end{align}
\end{minipage}

\bigskip

Transitive closures, of course, are not part of the OWL specification \cite{owl2-primer}, i.e., this characterization cannot be used when modeling in the Web Ontology Language. Description logics featuring regular expressions over roles have, however, been considered since the early days of DL research \cite{DBLP:conf/ijcai/Baader91} and decision and query answering procedures have been described for very expressive DLs with that feature \cite{DBLP:conf/ijcai/CalvaneseEO09}.

\section{A Simple Tree Pattern}\label{sec:arb-trees}

The repository of ontology design patterns on ontologydesignpatterns.org does not contain any pattern for trees. There is also none for graphs which could have been used to specialize a trees pattern. The repository contains a pattern for lists, though,\footnote{\url{http://ontologydesignpatterns.org/wiki/Submissions:List}} and a list pattern could be generalizable to a tree pattern. 

The schema diagram for this list pattern is depicted in Figure~\ref{fig:list}. It reuses the sequence pattern\footnote{\url{http://ontologydesignpatterns.org/wiki/Submissions:Sequence}} which seems to be the relevant part for our purposes. We depict the sequence pattern schema diagram in Figure \ref{fig:sequence} and all non-tautological axioms are given in Figure \ref{fig:sequenceAxioms}.\footnote{Generated with the OWLAPI \LaTeX{} renderer \cite{eswc17-latex}.} The axiomatization appears to be rather minimalistic, e.g., ``follows'' should be transitive over ``directlyFollows'', and for a sequence we should also use cardinality restrictions to limit the number of followers and predecessors. We will return to this later on.

\begin{figure}[t]
\begin{center}
\includegraphics[width=\textwidth]{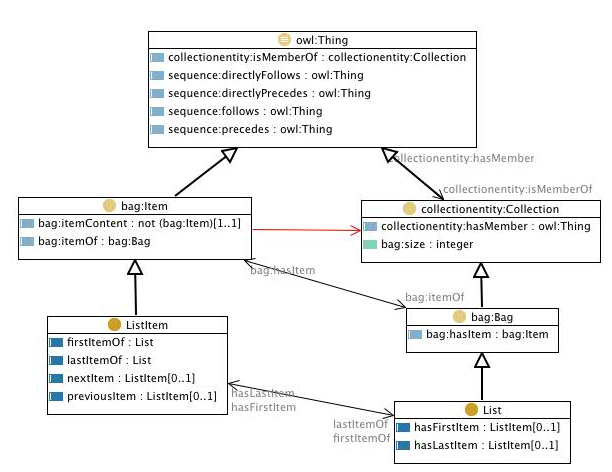}
\end{center}
\caption{List pattern schema diagram from ontologydesignpatterns.org}\label{fig:list}
\end{figure}

\begin{figure}[t]
\begin{center}
\includegraphics[width=\textwidth]{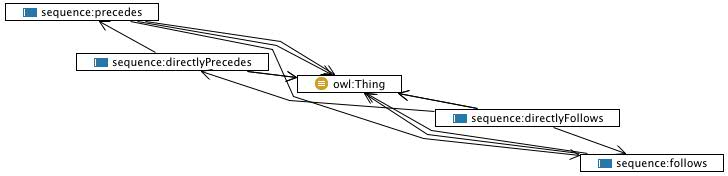}
\end{center}
\caption{Sequence pattern schema diagram from ontologydesignpatterns.org}\label{fig:sequence}
\end{figure}

\begin{figure}[t]
\begin{align*}
\text{directlyFollows} &\sqsubseteq  \text{follows}\\
\text{directlyFollows} &\equiv  \text{directlyPrecedes}^- \\
\text{directlyPrecedes} &\sqsubseteq  \text{precedes}\\
\text{precedes} &\equiv  \text{follows}^- \\
\text{TransitiveProperty}&(\text{follows})\\
\text{TransitiveProperty}&(\text{precedes})\\
\end{align*}
\caption{Axioms for the sequence pattern from Figure \ref{fig:sequence}. We omitted axioms that were tautologies.}\label{fig:sequenceAxioms}
\end{figure}

The list pattern just cited provides basic building blocks for a simple tree pattern. However, we opt to change the names of the properties: It seems to be more appropriate to use ``hasChild'' and ``hasDescendant'' rather than ``directlyPrecedes'' and ``precedes'', and to use ``hasParent'' and ``hasAncestor'' rather than ``directlyFollows'' and ``follows.''

Before proceeding with the tree pattern, we present a set of competency questions \cite{chess-odp-book} which seem representative to us and include operationes raised as important in Section \ref{sec:trees-of-life}:
\begin{enumerate}
\item\label{cq1} Determine the root.
\item\label{cq2}  Determine all ancestors of a given node.
\item\label{cq3}  Determine all leaves.
\item\label{cq4}  Determine all descendants of a given node.
\item\label{cq5}  Determine all descendants of a given node which are leaves.
\item\label{cq6}  Given two nodes, determine whether one is a descendant of the other. 
\item\label{cq7} Given two nodes, determine all commmon ancestors.
\item\label{cq8}  Given two nodes, determine the latest common ancestor.
\item\label{cq9}  Given two nodes $x$ and $y$, determine the earliest ancestor of $x$ which is not an ancestor of $y$.
\end{enumerate}

We next give our proposal for a simple tree pattern. Afterwards we will discuss our design choices. The schema diagram is given in Figure \ref{fig:tree}. However, the axiomatization is really much more important; it can be found in Figure \ref{fig:tree-axioms}. Note that some additional desired axioms, such as $\text{hasParent} \sqsubseteq \text{hasAncestor}$ and transitivity of hasAncestor can be inferred from the ones stated.

\begin{figure}[t]
\begin{center}
\includegraphics[width=.6\textwidth]{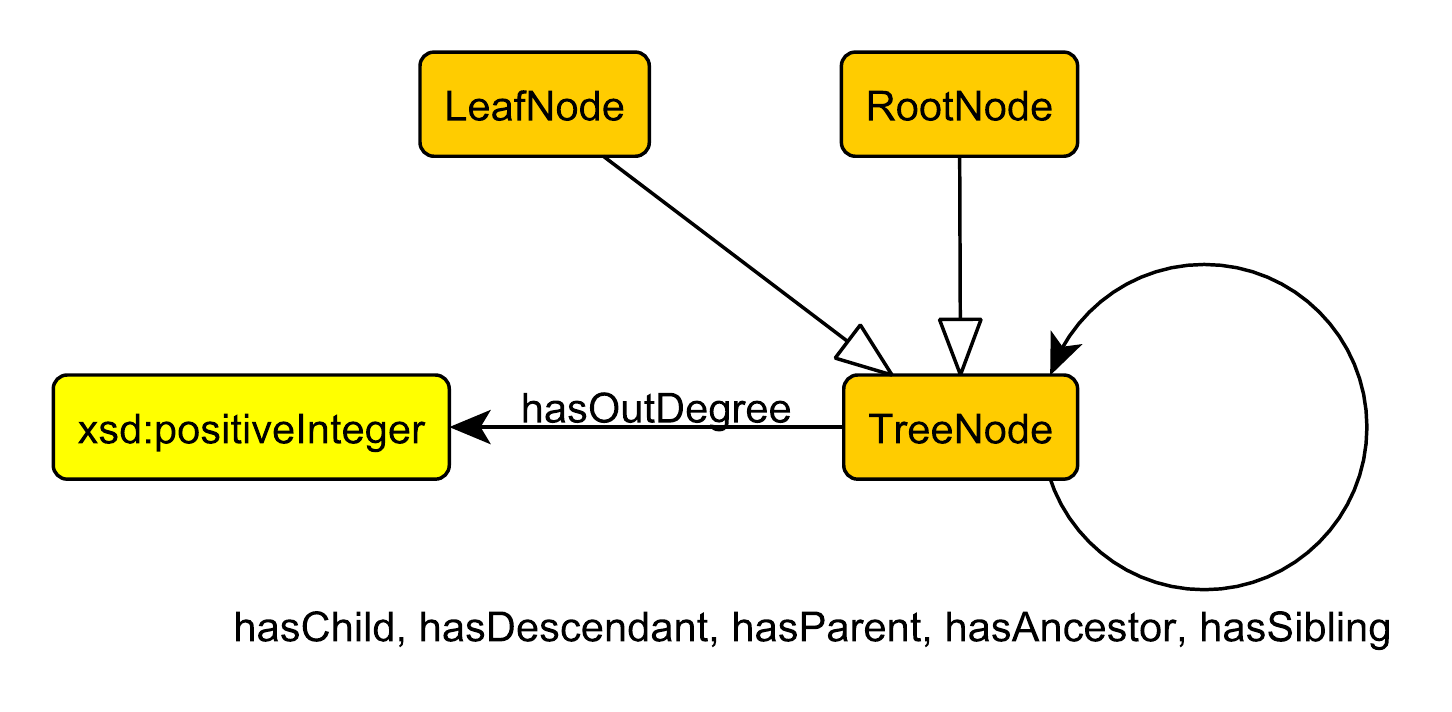}
\end{center}
\caption{Schema diagram for the simple tree pattern. Unlabelled arrows are subclass relationships.}\label{fig:tree}
\end{figure}

\begin{figure}[t]
\begin{align*}
\text{RootNode}&(a) &
\text{LeafNode}&(d) &
\text{LeafNode}&(e) \\
\text{LeafNode}&(f)&
\text{TreeNode}&(b) &
\text{TreeNode}&(c)\\
\text{hasChild}&(a,b) &
\text{hasChild}&(a,c) &
\text{hasChild}&(b,d) \\
\text{hasChild}&(b,e)&
\text{hasChild}&(c,f) &
\text{hasSibling}&(b,c)\\ 
\text{hasSibling}&(d,e) &
\text{hasOutDegree}&(a,2) &
\text{hasOutDegree}&(b,2)\\
\text{hasOutDegree}&(c,1) &
\text{hasOutDegree}&(d,0) &
\text{hasOutDegree}&(e,0)\\
\text{hasOutDegree}&(f,0)
\end{align*}
\caption{Example ABox for a tree.}\label{fig:tree-example}
\end{figure}

\begin{figure}[t]\small
\begin{align}
\text{LeafNode} &\sqsubseteq \text{TreeNode}\label{ax1}\\
\text{RootNode} &\sqsubseteq \text{TreeNode}\\
\text{TreeNode} &\sqsubseteq \forall\text{hasOutDegree}.\text{xsd:positiveInteger}\\
\text{TreeNode} &\sqsubseteq \mathord{=}1\text{hasOutDegree}.\text{xsd:positiveInteger}\\
\text{LeafNode} &\equiv \text{TreeNode} \sqcap \phantom{x}\notag\\ &\phantom{\equiv} \quad\forall\text{hasOutDegree}.\{0^{\wedge\wedge}\text{xsd:positiveInteger}\}\\
\text{TreeNode} \sqcap \neg\text{LeafNode} &\equiv \text{TreeNode} \sqcap \phantom{x} \notag\\ &\phantom{\equiv}\quad\forall\text{hasOutDegree}.\{x^{\wedge\wedge}\text{xsd:positiveInteger}\mid 1\leq x\}\label{ax6}\\
\text{hasChild} &\equiv \text{hasParent}^-\\
\text{hasDescendant} &\equiv \text{hasAncestor}^-\\
\text{hasChild} &\sqsubseteq \text{hasDescendant}\\
\text{hasDescendant} \circ \text{hasDescendant} &\sqsubseteq \text{hasDescendant}\\
\text{TreeNode} &\sqsubseteq \forall\text{hasChild}.\text{TreeNode}\\
\text{TreeNode} \sqcap \neg \text{LeafNode} &\equiv \text{TreeNode} \sqcap \exists \text{hasChild}.\text{TreeNode}\\
\text{TreeNode} &\sqsubseteq \forall\text{hasDescendant}.\text{TreeNode}\\
\text{TreeNode} &\sqsubseteq \forall\text{hasParent}.\text{TreeNode}\\
\text{TreeNode} &\sqsubseteq \forall\text{hasSibling}.\text{TreeNode}\\
\text{TreeNode} \sqcap \neg \text{RootNode} &\equiv \text{TreeNode} \sqcap \mathord{=} 1\text{hasParent}.\top\\
\text{TreeNode} &\sqsubseteq \forall\text{hasAncestor}.\text{TreeNode}\\
\text{RootNode} &\equiv \text{TreeNode} \sqcap \neg \exists\text{hasParent}.\top\\
\text{LeafNode} &\equiv \text{TreeNode} \sqcap \neg \exists\text{hasChild}.\top\\
\text{Irreflexive}&(\text{hasChild})\\
\text{Irreflexive}&(\text{hasParent})\\
\text{Irreflexive}&(\text{hasDescendant})\label{ax25}\\
\text{Irreflexive}&(\text{hasAncestor})\label{ax26}\\
\text{hasSibling} &\equiv \text{hasSibling}^-\\
\text{Irreflexive}&(\text{hasSibling})
\end{align}
\caption{Axioms for the tree pattern from Figure \ref{fig:tree}.}\label{fig:tree-axioms}
\label{figure:tree}
\end{figure}

Before we proceed, let us first make a concrete example how this pattern informs the graph structure of the ABox. Given a tree such as 
$$
\xymatrix{
& & a \ar[dl] \ar[dr] & &\\
& b \ar[dl] \ar[dr] & & c \ar[dr]&\\
d & & e & & f
}
$$
we encode it using the ABox from Figure \ref{fig:tree-example}; note that we omit redundant statements which can be inferred from the axioms. 

The axioms from Figure \ref{fig:tree-axioms} should be self-explanatory. Axiom (\ref{ax6}) uses a datatype facet. The complete set of axioms is not in OWL DL because axioms (\ref{ax25}) and (\ref{ax26}) declare irreflexivity for non-simple roles. If it is desireable to stay within OWL DL, these axioms could be omitted.

Our pattern and axiomatization include some terms which may appear to be redundant. Indeed, we could have omitted the use of hasParent and hasAncestor as these are simply inverses of hasChild and hasDescendant, respectively. Other aspects, however, are not redundant.

The property hasOutDegree may appear to be redundant, as it captures the number of children of a node. However, it is not redundant as far as the OWL model is concerned, because the underlying open world assumption makes it impossible to count the number of children. This could of course be addressed using a (local) closed world extension of OWL, such as in \cite{CarralJH12}, however the current standard does not support this. For the same reason, membership in the classes RootNode and LeafNode cannot be inferred.

\afterpage{\FloatBarrier}

The case of the hasSibling property is more intricate. Again it would naively appear as if it were redundant. However, we have not been able yet to axiomatize it in the general case; for special cases where it is possible see Section \ref{sec:bin-trees}. A naive attempt by means of a rule\footnote{This rule can be converted into OWL DL using rolification, see \cite{KrisnadhiMH11}.} 
$$\text{hasParent}(x,y) \wedge \text{hasChild}(y,z) \to \text{hasSibling}(x,z)$$ 
is insufficient because for addressing some competency questions we require irreflexivity of hasSibling, while the rule above renders hasSibling to be non-simple, which is not allowed together with irreflexivity in OWL DL.

Let us return to the competency questions listed earlier; it turns out we can address them all even when omitting the irreflexivity axioms for hasDescendant and hasAncestor, such that our model stays within OWL DL. Questions \ref{cq1} and \ref{cq3} can be addressed using the RootNode and LeafNode classes. Questions \ref{cq2} and \ref{cq4} are straightforward, as is question \ref{cq5} using the LeafNode class. Question \ref{cq6} can be solved with two queries using the hasDescendant property. 

The remaining questions are more intricate. Given two nodes $x$ and $y$, Question \ref{cq7} can be addressed via
$$\exists\text{hasDescendant}.\{x\} \sqcap \exists\text{hasDescendant}.\{y\}.$$
Question \ref{cq8} seems to require use of the hasSibling property. For readability we first give the solution as first-order predicate logic formula: 
\begin{align*}
&\phantom{\wedge} \text{hasChild}(z,w) \wedge (w=x \vee \text{hasDescendant}(w,x))\\ &\wedge \text{hasSibling}(w,v) \wedge (v=y \vee \text{hasDescendant}(v,y)).
\end{align*}
Conversion of this into OWL following the approach laid out in \cite{KrisnadhiMH11} results in the class description 
\begin{align*}
\exists\text{hasChild}.(&(\{x\}\sqcup\exists\text{hasDescendant}.\{x\})\\
&\sqcap (\exists\text{hasSibling}.(\{y\}\sqcup\exists\text{hasDescendant}.\{y\})).
\end{align*}

In a similar fashion, Question \ref{cq9} can be addressed using the class description 
$$(\{x\}\sqcup\exists\text{hasDescendant}.\{x\})\sqcap (\exists\text{hasSibling}.(\{y\}\sqcup\exists\text{hasDescendant}.\{y\})).$$

Note that irreflexivity of hasSibling is required for the class descriptions just given. Or to be more precise, what is required is that there is no node $z$ in the tree for which $\text{hasSibling}(z,z)$ is declared or can be inferred; note that this is in fact a weaker requirement than what we get by declaring irreflexivity. As a consequence, the irreflexivity declaration for hasSibling can actually be omitted from the axiomatization without impact on the just given solution to Question~\ref{cq9}; however we prefer to keep the irreflexivity declaration in the axiomatization as it disambiguates the model \cite{HitzlerK16}.   As mentioned earlier, we have not been able to find a solution to infer the (irreflexive) hasSibling relationship in the general case using OWL axioms, thus we require it as a primitive. We will revisit this in the next section, though.

Let us finally use our tree pattern to recover a list pattern based on it. The schema diagram can be found in Figure \ref{fig:list-new}. We keep only one property, hasNext, which corresponds to hasChild, and hasSuccessor, which corresponds to hasDescendant. The root becomes the first list item, the leaves become the last list item. The outdegree is always 1 unless it's the last list item, so we also omit this information. The corresponding axiomatization, as derived from the tree axiomatization above, can be found in Figure \ref{fig:list-axioms-new}. As before, Axiom (\ref{ax-list-irreflexive}) causes the pattern to fall outside OWL DL, and if this is undesirable, this axiom should be omitted.

\begin{figure}[t]
\begin{center}
\includegraphics[width=.4\textwidth]{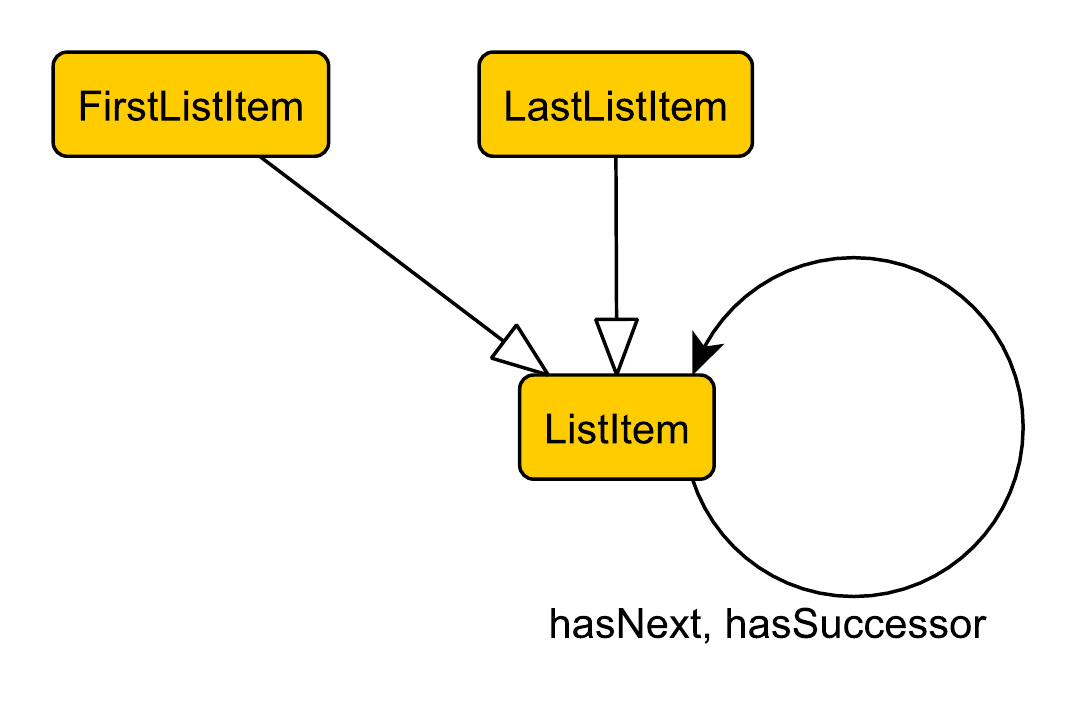}
\end{center}
\caption{Schema diagram for the simple list pattern, derived from the tree pattern. }\label{fig:list-new}
\end{figure}

\begin{figure}[t]\small
\begin{align}
\text{FirstListItem} &\sqsubseteq \text{ListItem}\\
\text{LastListItem} &\sqsubseteq \text{ListItem}\\
\text{ListItem} &\sqsubseteq \forall\text{hasNext}.\text{ListItem}\\
\text{ListItem} &\sqsubseteq \forall\text{hasNext}^-.\text{ListItem}\\
\text{ListItem} \sqcap \neg \text{LastListItem} &\equiv \text{ListItem} \sqcap \mathord{=}1\text{hasNext}.\text{ListItem}\\
\text{ListItem} \sqcap \neg \text{FirstListItem} &\equiv \text{ListItem} \sqcap \mathord{=}1\text{hasNext}^-.\text{ListItem}\\
\text{FirstListItem} &\equiv \text{ListItem} \sqcap \neg \exists\text{hasNext}^-.\top\\
\text{LastListItem} &\equiv \text{ListItem} \sqcap \neg \exists\text{hasNext}.\top\\ 
\text{hasNext} &\sqsubseteq \text{hasSuccessor}\\
\text{hasNext} \circ \text{hasSuccessor} &\sqsubseteq \text{hasSuccessor}\\
\text{Irreflexive}&(\text{hasSuccessor})\label{ax-list-irreflexive}
\end{align}
\caption{Axioms for the lists pattern from Figure \ref{fig:list-new}.}\label{fig:list-axioms-new}
\end{figure}

\section{Trees With Bounded Arity}\label{sec:bin-trees}

In this section, we look at $n$-bounded trees as a special case, and present a set of OWL axioms that can be employed to model these.

\begin{definition}
A \emph{rooted directed branching $n$-bounded tree} (short: \emph{$n$-bounded tree}) is a tree $T = (V, E)$ where every node has at most $n$ outgoing edges; i.e., for every $v \in V$, $\vert E \cap (\{v\} \times V) \vert \leq n$.
\end{definition}

In the previous section, we have discussed why we needed to include an explicit hasSibling relation in our model, namely because we were unable to axiomatically define it in OWL. If we know that the trees under consideration are $n$-bounded, though, we can in fact infer the hasSibling relation. 

To this end, we introduce a set of additional axioms (see Figure \ref{figure:binaryTreeAxioms}) that, if combined with the axioms  from Figure \ref{figure:tree} can be used to axiomatize the structure of an $n$-bounded tree. Key to this 
are axioms (\ref{disjunction}) and (\ref{disjoint}), which together limit the number of children per node to a maximum of $n$. 

Note that, if these axioms are considered, we can indeed automatically infer the hasSibling relationship: Having an upper bound on the number of children per node, we can use a finite set of concept names $\text{Child}_i$ to differentiate the children of any given node in the tree.
Note how, due to (\ref{disjunction}) and (\ref{disjoint}), every $n$-ary tree node is typed by one and only one class $\text{Child}_i$.
Moreover, axioms in (\ref{atMost1Childi}) enforce that at most one child of every node is in the class $\text{Child}_i$ for every $i = 1, \ldots, n$.
Using axioms in (\ref{selfConnect}), we automatically infer that each child of every node is connected to itself via some property $R_i$ for some $i = 1, \ldots, n$.
Using axiom (\ref{chain}), 
we can infer the hasSibling relation.


\begin{figure}[t]
\begin{align}
n\text{-BoundedTreeNode} &\sqsubseteq \text{TreeNode} \\
n\text{-BoundedTreeNode} &\sqsubseteq \forall \text{hasAncestor}.n\text{-BoundedTreeNode} \\
n\text{-BoundedTreeNode} &\sqsubseteq \forall \text{hasDescendant}.n\text{-BoundedTreeNode} \\
n\text{-BoundedTreeNode} &\sqsubseteq \text{Child}_1 \sqcup \ldots \sqcup \text{Child}_n \label{disjunction} \\
\{\text{Child}_i \sqcap \text{Child}_j &\sqsubseteq \bot \mid 1 \leq i < j \leq n \} \label{disjoint} \\
\{n\text{-BoundedTreeNode} &\sqsubseteq \mathord{\leq} 1 \text{hasChild}.\text{Child}_i \mid 1 \leq i \leq n\} \label{atMost1Childi}\\
n\text{-BoundedTreeNode} &\sqsubseteq \mathord{\leq} n \text{hasChild}.n\text{-BoundedTreeNode} \\
\{\text{Child}_i &\sqsubseteq \exists R_\text{i}.\textsf{Self} \mid 1 \leq i \leq n\} \label{selfConnect} \\
\{R_\text{i} \circ \text{hasParent} \circ \text{hasChild} \circ R_\text{j} &\sqsubseteq \text{hasSibling} \mid 1 \leq i < j \leq n \} \label{chain} 
\end{align}
\caption{Axioms for the $n$-bounded Tree Pattern: These need to be added to the axioms from Figure \ref{figure:tree}}
\label{figure:binaryTreeAxioms}
\end{figure}

Note, though, that hasSibling is non-simple, i.e. the declaration of irreflexivity for hasSibling from Figure \ref{figure:tree} violates regularity. If the ontology shall fall within OWL DL, the irreflexivity axiom should be removed.

Note also, that the approach just spelled out may not be practical for large $n$, as the number of models to be checked, e.g. by a tableaux-based reasoner, will increase exponentially with $n$ due to the disjunction in (\ref{disjunction}).

\section{Conclusions}\label{sec:conclusion}

We have presented a general ontology design pattern for trees together with an axiomatization which makes it possible to answer non-trivial competency questions as they arise in practice. We have also presented a list pattern derived from this tree pattern. We have furthermore discussed limitations of OWL for the modeling of trees, and have provided an alternative axiomatization for the more specific case that the tree is known to be $n$-bounded.

Of course, our approach is still rather straightforward and there exist cases where our model will not suffice. For example, in the application domain discussed in Section \ref{sec:trees-of-life}, it is often desirable to attach additional information to parent-child relationships (i.e., edges), e.g. temporal information. This cannot be done in our current model but would require a reification of the edges using established techniques, and of course this change may affect the treatment of our competency questions. This remains to be investigated.

\bigskip

\noindent\emph{Acknowledgements.} Pascal Hitzler and Hilmar Lapp
acknowledge support by the National Science Foundation (NSF) under
awards 1440202 ``Earthcube Building Blocks: Collaborative Proposal:
GeoLink -- Leveraging Semantics and Linked Data for Data Sharing and
Discovery in the Geosciences'', and DBI-1458484 ``An Ontology-Based
System for Querying Life in a Post-Taxonomic Age'', respectively.

\urlstyle{same}
\bibliographystyle{splncs03}
\bibliography{all}



\end{document}